\theoremstyle{definition}
\newtheorem{theorem}{Theorem}
\newtheoremstyle{exampstyle}
{1pt} 
{1pt} 
{} 
{} 
{\bfseries} 
{} 
{.5em} 
{} 
\theoremstyle{exampstyle} 
\theoremstyle{exampstyle} \newtheorem{remark}{Remark}
\theoremstyle{exampstyle} \newtheorem{definition}{Definition}
\theoremstyle{exampstyle} 
\theoremstyle{exampstyle} 
\newcommand{\minus}{\scalebox{0.75}[1.0]{$-$}}
\xpatchcmd{\@thm}{\thm@headpunct{.}}{\thm@headpunct{}}{}{}
\def\BibTeX{{\rm B\kern-.05em{\sc i\kern-.025em b}\kern-.08em
		T\kern-.1667em\lower.7ex\hbox{E}\kern-.125emX}}
\begin{document}
	
	\title{Communication-Efficient Federated Distillation with Active Data Sampling
			\thanks{This work is supported in part by the Hong Kong Research Grant Council under Grant No. 16208921.}
	}
	
	\author{
		\IEEEauthorblockN{Lumin Liu, Jun Zhang, S. H. Song, and Khaled B. Letaief, \emph{Fellow, IEEE}}\\
		\IEEEauthorblockA{Dept. of ECE, The Hong Kong University of Science and Technology, Hong Kong\\
			Email:{ lliubb@ust.hk,
				eejzhang@ust.hk,
				eeshsong@ust.hk,
				eekhaled@ust.hk}}
	}               
	
	\maketitle
	\begin{abstract}
	    Federated learning (FL) is a promising paradigm to enable privacy-preserving deep learning from distributed data. Most previous works are based on federated average (FedAvg), which, however, faces several critical issues, including a high communication overhead and the difficulty in dealing with heterogeneous model architectures. Federated Distillation (FD) is a recently proposed alternative to enable communication-efficient and robust FL, which achieves orders of magnitude reduction of the communication overhead compared with FedAvg and is flexible to handle heterogeneous models at the clients. However, so far there is no unified algorithmic framework or theoretical analysis for FD-based methods. In this paper, we first present a generic meta-algorithm for FD and investigate the influence of key parameters through empirical experiments. Then, we verify the empirical observations theoretically. Based on the empirical results and theory, we propose a communication-efficient FD algorithm with active data sampling to improve the model performance and reduce the communication overhead. Empirical simulations on benchmark datasets will demonstrate that our proposed algorithm effectively and significantly reduces the communication overhead while achieving a satisfactory performance.
	\end{abstract}
	
	\section{Introduction}
	Federated Learning (FL) has recently attracted considerable attention due to its ability to collaboratively and effectively train machine learning models while preserving users' privacy \cite{FLsurvey}. A popular FL algorithm is Federated Average (FedAvg)  \cite{mcmahan2017communication}, which aggregates models trained by different clients via weight averaging. FedAvg has been successfully implemented on real-world applications \cite{hard2018federated} and has inspired tremendous research interests in designing efficient and robust FL algorithms \cite{ICCLiu}.
	
	Nevertheless, weight-averaging-based methods have many limitations. For example, the local neural network architectures at different clients have to be the same, and the communication overhead is proportional to the local model size. The communication issue has been partially addressed by adopting model compression techniques to reduce the communication cost \cite{reisizadeh2020fedpaq}, while the restrictions of model architectures have been largely ignored. In a realistic FL system, clients have heterogeneous computational and communication resources. Hence, it would be highly ineffective to require all the local models to be of the same architecture.
	
    To allow heterogeneous models and reduce the communication overhead, knowledge distillation (KD) was introduced to enable effective low-cost information exchange in FL. KD \cite{hinton2015distilling} is an effective mechanism to transfer knowledge from a large teacher model to a small student model, where the student model mimics the teacher model's output, i.e., logits, on the same training data. Thus, the model architecture of the student can be different and the communication cost only depends on the logits size rather than the model weights. 
    However, since KD is data-dependent, the training data were assumed to be universally accessible in classic KD methods. Considering the privacy regulation in FL, Federated Distillation (FD) needs to achieve distillation without sharing the local private data.
    
    In \cite{jeong2018communication}, distillation was achieved by transmitting and aggregating label-wise logits of the local training data. 
    In \cite{oh2020mix2fld}, an auxiliary distillation dataset was generated with a linear mixture of the local training data. However, the learning performance of these two approaches degrades noticeably compared with FedAvg. In \cite{dsflNishioMC}, it was assumed that there exists a public unlabeled dataset at both the server and the clients for the distillation process. An entropy reduction technique was proposed to improve the model performance under non-iid data. In \cite{sattler2020communication}, delta-coding on the logits was proposed to further reduce the communication cost and the knowledge was distilled at the server side. In \cite{NEURIPS2020_ensembledistillation}, distillation was introduced as an additional technique after weight averaging at the server side. 
    In  \cite{NEURIPS2020_ondevicedistillation}, fully distributed distillation in a connected network was considered and the gradient of the training loss function was proved to converge to zero asymptotically. 
    These approaches showed comparable or even better performance than the weight-averaging method with a much less communication cost and even in heterogeneous model architectures. 
    
    Existing FD algorithms, while sharing similar key steps, are proposed from different perspectives, which makes it difficult to characterize and improve their performance. 
    For FedAvg, systematic and theoretical understandings have been developed \cite{9563947},  which enables further design and optimization for the FL system with weight-averaging-based methods.
    However, for these FD algorithms, despite the empirical success, there lacks a clear understanding, either experimentally or theoretically, of the key components, i.e., \begin{enumerate*}
        \item the auxiliary data distribution; 
        \item the logits aggregation strategy; and
        \item the size of the upload logits.
    \end{enumerate*}
    
    In this paper, we endeavor to fill this important gap and answer these questions. We will first propose a generic meta-algorithm for FD, and investigate the effects of key parameters. Our results will show that in order to achieve a good training performance, the public auxiliary data distribution should be close to the local training data, the logits aggregation strategy should reduce the logits entropy, and the size of the upload logits size should be sufficiently large. To verify and better understand these observations, we will provide a theoretical characterization of the FD meta-algorithm with a binary classification problem and Gaussian mixture models. 
    
    Inspired by the findings from these empirical observations and theoretical results, i.e., the logit entropy should be low and the distillation set size should be large, we will propose a communication-efficient FD algorithm, named, Federated distillation with Active data Sampling (FAS). In the proposed algorithm, each client only uploads a subset of the logits with low entropy. Accordingly, the samples from the public data that join the distillation will be different among different users and thus the size of the distillation logits at the server size will be larger than the upload communication cost. Simulation results will demonstrate that the proposed algorithm achieves a better performance under a limited communication cost and non-iid data distribution compared with baseline FD methods.
    
	\section{Preliminary}
	In this section, we briefly introduce the notations for FL and KD, respectively.
	\subsection{Federated Learning}
	
	In FL, there are $n$ clients with local private datasets $\{ \mathcal{D}_i \}_{i=1}^n$ following the probability distribution $\{\mathcal{P}_i\}_{i=1}^n$. The dataset size of the $i$-th client is $D_i$. Based on the local dataset $\{ \mathcal{D}_i \}$, the empirical local loss function for the $i$-th client is expressed as
    \begin{equation}
	    L_i (\theta) = \frac{1}{D_i} \sum_{ {\{ \boldsymbol{x}_j, y_j \} } \in \mathcal{D}_i}\mathcal{L}(\theta,\boldsymbol{x}_j, y_j) \label{eq:4},
    \end{equation}
    where $\mathcal{L}(\theta,\boldsymbol{x}_j, y_j)$ is the loss function of the training data sample $\boldsymbol{x}_j$ and its label $y_j$, and $\theta$ denotes the model parameters. 
    The target in FL is to learn a global model that performs well on the average of the local data distributions. Denote the joint dataset as $\mathcal{D} = \bigcup_{i=1}^n \mathcal{D}_i $ then the target training loss function in FL is given by
    \begin{equation}
	    L (\theta) = \frac{1}{\sum_{i=1}^{n}D_i} \sum_{{\xi_j}\in \mathcal{D}}\mathcal{L}(\theta,\xi_j) = \frac{1}{\sum_{j=1}^{n}D_j} \sum_{i=1}^n D_i L_i(\theta). \label{eq:5}
    \end{equation}
    The most commonly adopted training algorithm in FL is FedAvg, where each client periodically updates its model locally and averages the local model parameters through communications with a central server (e.g., at the cloud or edge).
    The parameters of the local model on the $i$-th client after $t$ steps of stochastic gradient descent (SGD) iterations are denoted as $\theta_t^i$. In this case,  $\theta_t^i$ evolves as follows

    \begin{equation}
    \text{$\theta_t^i$ } = 
    \begin{cases}
    \text{$\theta_{t-1}^i - \eta  \tilde{\nabla} L_i(\theta_{t-1}^i)$} &  \text{$t \mid \tau \neq 0$}\\
    \text{ $\frac{1}{n} \sum_{i=1}^n[\theta_{t-1}^i - \eta \tilde{\nabla} L_i(\theta_{t-1}^i)]$ } &
    \text{$t \mid \tau = 0$}
    \end{cases} \label{eq:6}
    \end{equation}
	
	\subsection{Knowledge Distillation}
	\textit{Knowledge Distillation} (KD) is the process of distilling knowledge from a large and well-trained teacher model to a small student model. 
	For a classification problem with $N_c$ classes,
	the logit of data sample $\boldsymbol{x_i}$ is denoted as $t(\boldsymbol{x_i}) $ and it is the vector of the class probabilities which is obtained by using a softmax function on the neural network output. That is,
	\begin{equation}
		t(\boldsymbol{x_i}) = softmax (\theta(\boldsymbol{x_i}) ),
	\end{equation}
	where  $\theta(x_i) \in \mathcal{R}^{N_c}$ denotes the model output of input data sample $\boldsymbol{x_i}$, and $\theta(\cdot)$ is the neural network function parameterized by model parameters $\theta$.
	Speficically, for the $n$-th element of logit $t(\boldsymbol{x_i})$ of data sample $\boldsymbol{x_i}$,
	\begin{equation}
	    t^n(\boldsymbol{x}_i) = \frac{\exp{( \theta(\boldsymbol{x}_i)^n / T) }}{\sum_{m=1}^{N_c}\exp{(\theta(\boldsymbol{x}_i)^m /T)}  },
	\end{equation}
	where $T$ is the distillation temperature with a higher temperature producing a smoother probability distribution over classes.
	
	The distillation loss of the trainset $\mathcal{D}$ is the cross-entropy loss for the teacher logit $t_t$ and the student logit $t_s$, which is
	\begin{equation}
	   L_{distill} =  - \sum_{\boldsymbol{x} \in \mathcal{D}} \sum_{n=1}^{N_c}t_t^n (\boldsymbol{x}) log(t_s^n(\boldsymbol{x})).
	   \label{eq:1}
	\end{equation}
	In the distillation process, the student's objective function is an average of the distillation loss $L_{distll}$ and the cross entropy loss with the groudtruth labels.
	
		\begin{figure}[t] 
		\centerline{\includegraphics[width=.8\linewidth]{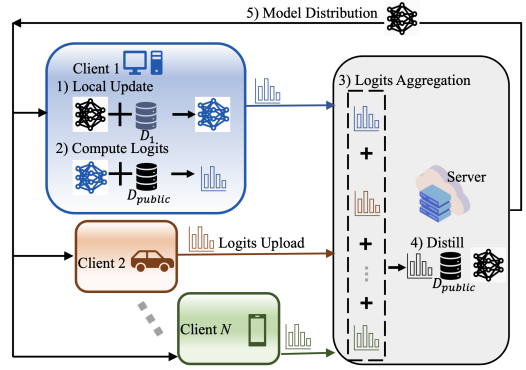}} 
		\caption{Illustration of FD meta-algorithm.}
		\label{fig1}
	\end{figure}
	
	\section{Federated Distillation Meta-Algorithm}
    In this section, we will first introduce the FD system and present a meta-algorithm, which is constituted of several key components. Then, we will investigate the impacts of these key components both empirically and theoretically.
    
    \begin{table*}
    	\centering
    	\caption{Comparison of different algorithms.}
    	\begin{tabular}{|c |c  | c |c | c| c| c| c|} 
    		\hline
    		&\small Upload &\small Aggregation   & \makecell{\small Auxiliary \\ \small Dataset} \normalsize & \makecell{\small Model \\ \small Heterogeneity }& \makecell{\small Communication \\ \small Cost (Uplink)} \normalsize & \makecell{\small Model \\ \small Performance}\\
    		\hline
    		FedAvg\cite{mcmahan2017communication} & \small Weights &\footnotesize Average & $\times$& $\times$ &$\mathcal{O}(|\theta|)$ & \small Baseline \\
    		\hline
    		FedDF\cite{NEURIPS2020_ensembledistillation} & \small Weights & \footnotesize Average \& Distill&\makecell{\checkmark} & \checkmark  & $\mathcal{O}(|\theta|)$ & \checkmark \\
    		\hline 
    		FDA\cite{jeong2018communication} & \makecell{\small Label-logits}  &\footnotesize Average  &$\times$& \checkmark & $N_c^2$ &$\times$ \\
    		\hline
    		DSFL\cite{dsflNishioMC} & \makecell{\small Logits}  &\footnotesize \makecell{Entropy Reduction \\ Average} & \makecell{
    			\makecell{\checkmark}} & \checkmark & $|\mathcal{D}_{logit}|N_c$ &\checkmark \\
    		\hline 
    		CEFD \cite{sattler2020communication} & \makecell{\small Delta-coded logits }  &\footnotesize Average& \makecell{\checkmark} & \checkmark & \textbackslash \footnotemark & \checkmark \\
    		\hline 
    		FD meta-algorithm & \small Logits  &\footnotesize Average\&Distill &\checkmark & \checkmark & $|\mathcal{D}_{logit}|N_c$ & \checkmark \\
    		\hline
    	\end{tabular}
    	\label{table3}
    \end{table*}
    
	\subsection{FD Meta-Algorithm}
	
    For a FD system with $n$ clients, the local private labeled dataset of the $i$-th client is denoted as $ \mathcal{D}_i =\{ \boldsymbol{x_i^j}, y_i^j \}_{j=1}^{D_i}$. A shared public unlabeled dataset $\mathcal{D}_{pub} = \{\boldsymbol{x^j} \}_{j=1}^{D_{pub}}$ is assumed accessible for each client and the server, where each data sample is identified by a unique and universal index.  The local loss function of client $i$ with local model parameters $\theta _i$ is denoted as $L_i(\theta_i)$. 	
   	In the $k$-th communication round, the selected clients perform local updates on their local private datasets $\mathcal{D}_i$'s and get locally trained models $\theta^i_k$'s.
   	
   	The weight-averaging-based method will directly upload and average the model weights of different clients, and then the training proceeds to the next communication round. However, since the local models $\{\theta_i \}_{i=1}^{n}$ may have different neural network architectures, e.g., simple fully-connected neural networks and ResNets, it is infeasible to directly average the model weights of these heterogeneous clients.
   	
   	\footnotetext{Since in CEFD, delta coding is applied to the logits of the whole distillation datasets $\mathcal{D}_{pub}$, the communication cost is smaller than $|\mathcal{D}_{pub}|N_c$. But it  varies in the training process.}
   	
   	\begin{algorithm}[t] 
   		\setstretch{1}
   		\SetAlgoLined
   		Initialize local model $\{\theta^i\}$ and server model $\theta$ \\
   		\For{k = 0,1,\dots, K-1}{
   			Download the server model $\theta_{k-1}^i  = \theta_{k-1}$,\\
   			Select clients $\mathcal{C}$ from the $n$ clients,\\
   			Select a subset $\mathcal{D}_{logit}$ of the public dataset $\mathcal{D}_{pub}$, \\
   			\For{client $i \in \mathcal{C}$}{
   				\textbf{Local update}: $\theta_k^i = \theta_{k-1}^i - \eta  \tilde{\nabla} L_i(\theta_{k-1}^i)$,\\
   				\textbf{Compute the logits}: $t_i(\xi) = softmax(\theta_t^i(\xi)) \text{ for }\xi \in \mathcal{D}_{logit}$ \\
   				\textbf{Upload the logits and indexes}: $\{t_i(\boldsymbol{x})) \}_{\boldsymbol{x} \in \mathcal{D}_{logit}}$, $\mathcal{I}_{logit}$\\
   			}	
   			\textbf{Aggregate the logits}: $t (\boldsymbol{x}) = \frac{1}{| \mathcal{C} |} \sum{_{i \in \mathcal{C}}} t_i(\boldsymbol{x})$\\
   			\textbf{Model distillation}: $\theta_t = \theta_{k-1} - \eta  \tilde{\nabla} L_{distill}(\theta_{k-1})$\\
   		}
   		\caption{FD Meta-Algorithm}
   		\label{algorithm1}
   	\end{algorithm}
   	
   	To enable information sharing of the clients with heterogeneous neural architectures, in FD, the selected clients will compute the logits on a subset $\mathcal{D}_{logit}$ of the public unlabeled dataset $\mathcal{D}_{pub}$, and the indexes of the data sample in $\mathcal{D}_{logit}$ are denoted as $\mathcal{I}_{logit}$.  The computed logits of the selected subset $\{t_i(\boldsymbol{x}) \}_{\boldsymbol{x} \in \mathcal{D}_{logit}}$ and the index $\mathcal{I}_{logit}$ are uploaded to the server for logits averaging. The averaged logits then serve as the teacher logits in the distillation loss in \eqref{eq:1}. The distilled model is then distributed back to the selected clients in the next communication round. The uploading communication cost is   $N_c*|\mathcal{D}_{logit}|$ and the downloading communication cost is proportional to the local model size, i.e., $\mathcal{O}(|\theta_i|)$.  	   	   	
   	The FD system and the detailed procedure of the algorithm are illustrated in Fig. \ref{fig1} and Algorithm \ref{algorithm1}, respectively.
	
	It is worth noting that in some existing works (e.g.,  \cite{dsflNishioMC}), the averaged logits are distributed to the clients and the distillation happens at the client side. 
	Local distillation reduces the downloading communication cost to $N_c* |\mathcal{D}_{logit}|$ and is completely free of the worry of the model heterogeneity. However, it also induces more local computation. In addition, partial client participation is not allowed if the averaged logits are sent back to clients. To allow a heterogeneous model for the local update, the server can distill the averaged weights into different models and then send back the weights to its corresponding client. We compared these two methods empirically and found that the weights downloading method exhibits a faster convergence. Thus, we will adopt the model weights downloading method. 
	Finally, the differences of the typical algorithms mentioned in this paper are summarized in Table \ref{table3}. Given the enormous size of deep learning models, distillation-based methods achieve orders of magnitude reduction in the communication overhead and allow heterogeneous models for the local update. Comparable model performance can be achieved with an auxiliary public dataset. The FD-meta algorithm concluded the key components for the FD-based methods and can be extended to the existing work \cite{dsflNishioMC,sattler2020communication} with slight modifications, e.g. in \cite{dsflNishioMC} the entropy of the averaged logits was reduced. With this FD meta-algorithm, we can better understand the design principles in a FD system.

	\subsection{Empirical Observations}
	\label{observations} 
	There are some key components in the FD system which influence the communication cost and the final learning performance, i.e., the data distribution of public dataset $\mathcal{D}_{pub}$, the logits aggregation method, and the upload logits size.
	In this section, we investigate these key components with the FD-meta algorithm. With extensive simulations on the CIFAR-10 dataset, we will show their impacts in the following.  
\begin{figure*}
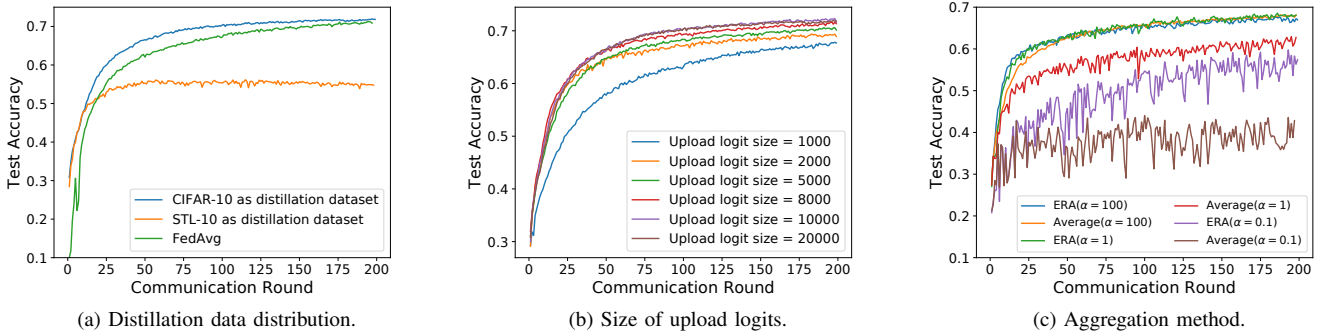

	\centering
	\begin{subfigure}[b]{0.32\textwidth}
		\centering
		\includegraphics[width=\textwidth]{Figure/datadistribution.pdf}
		\caption{Distillation data distribution.}
		\label{fig:datadistribution}
	\end{subfigure}
	\hfill
	\begin{subfigure}[b]{0.32\textwidth}
		\centering
		\includegraphics[width=\textwidth]{Figure/uploadsize.pdf}
		\caption{Size of upload logits.}
		\label{fig:uploadsize}
	\end{subfigure}
	\hfill
	\begin{subfigure}[b]{0.32\textwidth}
		\centering
		\includegraphics[width=\textwidth]{Figure/aggregation.pdf}
		\caption{Aggregation method.}
		\label{fig:aggregation}
	\end{subfigure}
	\caption{Empirical observations of the algorithm key components' impacts on the FD training performance. The figure lists the test accuracy versus the number of communication rounds between the clients and server.}
	\label{fig:keycomponents}
\end{figure*}
	
	\subsubsection{Distillation dataset distribution}
	
	A vital assumption in FD is the availability of a public unlabeled dataset which enables the distillation process. In practice, it is not difficult to collect or generate many unlabeled samples. However, it is difficult to collect or generate a public dataset which has the same data distribution as the private labeled dataset. In the empirical simulations of the FD works, the public dataset distribution problem is often ignored. 
	
	To investigate the impact of the distillation dataset distribution, we performed experiments with two distillation datasets, i.e., CIFAR-10, the dataset with exactly the same distribution, and STL-10, the dataset with a similar but broader distribution. The result is demonstrated in Fig. \ref{fig:datadistribution}, which shows the test accuracy of the server model after the clients upload their logits or weights to the server, which is one communication round. It can be clearly seen that there exists a noticeable performance gap between the public dataset with similar distribution and the same distribution. And when distilling with CIFAR-10 dataset, the model reaches a comparable test accuracy with FedAvg. 
		
	\subsubsection{Upload Logits Size}
	
	In the FD meta-algorithm, the uplink communication cost is determined by the number of selected data samples, i.e., the upload logits size. A straightforward way to further reduce the communication cost is to reduce the size of the selected public dataset subset, $\mathcal{D}_{logit}$. However, this will cause insufficient data for the distillation step at the server side. Hence, there exists a trade-off between the communication cost and accuracy. 
	
	To empirically investigate this trade-off, we perform experiments where the size of the uploaded logits ranges from 1,000 to 20,000. 
	The empirical results are demonstrated in Fig. \ref{fig:uploadsize}.
	It is seen that increasing the upload logits size from a relatively small number improves the training performance. However, as the logits size increases to a very large number, the performance gain of more distillation data samples becomes marginal. For example, by uploading 20,000 logits, we barely see any performance gain compared with the one with 10,000.
	
	\subsubsection{Logits Aggregation Method}
	
	In the meta-algorithm, a simple average is adopted for the logits aggregation at the server side. However, the simple average method shows a bad performance when the local private data distribution is non-i.i.d.. Entropy reduction aggregation (ERA) is an aggregation method, which was proposed in \cite{dsflNishioMC}. There it was shown that it can achieve much better performance compared with the simple average method. 
	
	The main idea in ERA is to increase the confidence of the aggregated teacher logits during the server distillation step. ERA first averages the logits uploaded by the selected clients 
	\begin{equation}
		t (\boldsymbol{x}) = \frac{1}{| \mathcal{C} |} \sum{_{i \in \mathcal{C}}} t_i(\boldsymbol{x}) \quad \text{ for } \boldsymbol{x} \in \mathcal{D}_{logit},
	\end{equation}	
	The entropy of the averaged logit $t(\boldsymbol{x})$ is then reduced by:
	\begin{equation}
		\hat{t}(\boldsymbol{x}) = \frac{\exp{( t(\boldsymbol{x}) / T) }}{\sum_{m=1}^{N_c}\exp{(t(\boldsymbol{x})^m /T)}  }
	\end{equation}
	where $T$ here should be set between 0 and 1 so as to sharpen the output and reduce the entropy of $\hat{t}$.
	
	We adopt the Dirichlet distribution $Dir(\alpha)$ to simulate the non-i.i.d. data distribution in FL and  perform experiments with three levels of non-i.i.d. data distribution, i.e., $\alpha = 100, 1, \text{ and } 0.1$. It is noted that the data heterogeneity increases as $\alpha$ decreases. The result is demonstrated in Fig. \ref{fig:aggregation}. When $\alpha= 0.1$, i.e., the local data distribution is very non-i.i.d., reducing the entropy of the logits greatly improves the FD training performance.

	\subsection{Theoretical Verification}
	From the empirical observations, we have seen that for the FD meta-algorithm,
	\begin{enumerate}
		\item A public unlabeled dataset with the same input distribution is necessary to guarantee a good training performance;
		\item The size of upload logits influences the convergence speed.
 		More logits lead to a better performance, but the performance gain becomes marginal when there is a sufficient amount of uploaded logits;
		\item ERA improves the model performance of non-i.i.d. data distribution. 
	\end{enumerate}	
	
	In this subsection, we verify the latter two observations theoretically through a binary classification problem with Gaussian mixture models. Particularly, we show that for this setting, the FD meta-algorithm is equivalent to semi-supervised learning (SSL) with self-training \cite{pmlr-v130-oymak21a}.
	
	We first give a definition of the binary classification problem, the Gaussian mixture models, and self-training.
	For the binary classification problem, suppose there is a labeled dataset $\mathcal{S} = (\boldsymbol{x_i}, y_i) \in \mathbb{R}^p \times \{-1, +1 \}$ and $f: \mathbb{R}^p \rightarrow \mathbb{R}$ is the prediction function. The prediction rule is then defined as:
	\begin{equation}
		\hat{y}_f(\boldsymbol{x}) = 
		\begin{cases}
		1 \quad \text{if } f(x) \geq 0\\
		-1 \quad otherwise
		\end{cases}
	\end{equation} 
	\begin{definition}[Binary Gaussian Mixture Model (GMM)]
		The distribution $ (\boldsymbol{x}, y) \sim \mathcal{D}$ is given as follows. Fix a unit vector $\boldsymbol{u} \in \mathbb{R}^p$ and a scalar $\sigma >0$, and let $y$ be a Rademacher random variable ($\mathbb{P}(y=1) = 1- \mathbb{P}(y=-1) = \frac{1}{2}$ and $\boldsymbol{x} \sim \mathcal{N}(y\boldsymbol{u}, \sigma \boldsymbol{I_p})$).
		\label{definition}
	\end{definition}
	The component mean  $\boldsymbol{u}$ is the optimal linear classifier for this binary classification problem, where the prediction function is $f(\boldsymbol{x}) = \boldsymbol{u}^T\boldsymbol{x}$. With a labeled dataset $\mathcal{S}  = (\boldsymbol{x}_i, y_i)_{i=1}^n$, $\boldsymbol{u}$ can be estimated by the following averaging estimator
	\begin{equation}
	\boldsymbol{\beta}_{init} = \frac{1}{n}\sum_{i=1}^n y_i\boldsymbol{x}_i,
	\label{eq:10}
	\end{equation}
	The self-training approach uses the predicted labels $\hat{y}_f(\boldsymbol{x})$ for an unlabeled dataset $\mathcal{U} = \{ \boldsymbol{x}_i\}_{i=n+1}^{n+u} $ (a.k.a, pseudo labels) to self-train. 
	Given the initial averaging estimator $\boldsymbol{\beta}_{init}$ of the labeled dataset in \eqref{eq:10} and an acceptance threshold $\boldsymbol{\beta}_{init}^T\boldsymbol{x} >\Gamma$, the updated estimator after self-training with the unlabeled dataset $\mathcal{U}$ is then
	\begin{equation}
		\hat{\boldsymbol{\beta}} = \frac{\sum_{i=n+1}^u \mathds{1} ( |\boldsymbol{\beta}_{init}^T\boldsymbol{x}_i| >\Gamma ) sgn(\boldsymbol{\beta}_{init}^T\boldsymbol{x}_i) \boldsymbol{x}_i}{\sum_{i=n+1}^u \mathds{1} ( |\boldsymbol{\beta}_{init}^T\boldsymbol{x}_i| >\Gamma )}.
		\label{eq:11}
	\end{equation}

	
	In the following, we will show the training process of the FD meta-algorithm with the binary classification problem of GMM as the learning objective. In FD, there are $N$ locally stored private datasets, $\mathcal{S}^i = (\boldsymbol{x_i}^k, y_i^k)_{i=1}^{n_k}$, and the unlabeled auxiliary distillation dataset is denoted as $\mathcal{U} = \{\boldsymbol{x}_i\}_{i=n}^{n+u}$. Following the steps in the FD meta-algorithm (Algorithm \ref{algorithm1}), the training proceeds as follows 
	\begin{enumerate}
		\item \textbf{Local Update:} After the local updates, each user $k$ will have a local averaging estimator as 
		\begin{equation}
		\small
		\boldsymbol{\beta}_{init}^k = \frac{1}{n_k}\sum_{i=1}^n y_i\boldsymbol{x}_i^k
		\label{eq:9}
		\end{equation}
		
		\item \textbf{Logits Comptutation:} Each user will compute the local model output (logits) of the unlabeled dataset, i.e., $\{(\beta_{init}^k)^T\boldsymbol{x}_i\}_{i=n}^{n+u}$ and upload the logits to the server.
		
		\item \textbf{Logits Aggregation:} The server averages the logits and we have the averaged logits of the distillation dataset $\mathcal{U} = \{\boldsymbol{x}_i\}_{i=n}^{n+u}$ as
		\begin{equation}
		\sum_{k=1}^n \frac{n_k}{n} \{(\boldsymbol{\beta}_{init}^k)^T\boldsymbol{x}_i \} = \boldsymbol{\boldsymbol{\beta}}_{s}^T\boldsymbol{x}_i
		\label{eq:8}
		\end{equation}
		
		\item \textbf{Model Distillation:} The server creates pseudo labels by choosing data samples in $\mathcal{U}$ whose logits satisfy $|\boldsymbol{\beta}_{s}^T\boldsymbol{x}| > \Gamma$ and the pseudo labels are generated by $\tilde{y} = \hat{y}_{\beta_{s}^T\boldsymbol{x}}(\boldsymbol{x})$.
		
		After distillation with the averaged logits, the estimator at the server side with the averaged logits is then
		\begin{equation}
		\hat{\boldsymbol{\beta}} = \frac{\sum_{i=n+1}^u \mathds{1} ( |\boldsymbol{\beta}_{s}^T\boldsymbol{x}_i| >\Gamma ) sgn(\boldsymbol{\beta}_s^T\boldsymbol{x}_i) \boldsymbol{x}}{\sum_{i=n+1}^u \mathds{1} ( |\boldsymbol{\beta}_{s}^T\boldsymbol{x}_i| >\Gamma )}
		\label{eq:7}
		\end{equation}
		where $\Gamma >=0$ is the acceptance threshold that eliminates low-confidence predictions. It is noted that this is similar to the ERA method, which also eliminates high-entropy, i.e., low confidence predictions in the distillation process.
	\end{enumerate}
	We measure the estimator performance with the cotangent of the angle of the estimator $\boldsymbol{\beta}$ and the optimal classifier $\boldsymbol{u}$:
	\begin{equation}
	\small
		cot(\boldsymbol{\beta}, \boldsymbol{u}) = \frac{\rho(\boldsymbol{\beta}, \boldsymbol{u})}{\sqrt{1-\rho^2(\boldsymbol{\beta}, \boldsymbol{u})}}.
	\end{equation}

	With $\hat{\beta}$, we have the following theorem. 
	
	\begin{theorem} 
		(\hspace{-1.5mm} \cite{pmlr-v130-oymak21a})
		Let $\boldsymbol{u}\in \mathbb{R}^p$ be a uniform vector from Definition \ref{definition} and suppose $\beta_{s} \in \mathbb{R}^p$ as defined in  \eqref{eq:8} has correlation $\rho(\boldsymbol{\beta}_{s}, \boldsymbol{u}) = \alpha >0$. Set $\beta = \sqrt{1-\alpha^2}$ and draw i.i.d. unlabeled samples $\{\boldsymbol{x}_i\}_{i=n+1}^{n+u}$ from GMM. Let $\hat{\beta}$ be defined in \eqref{eq:7}. Define the normalized thresholds $\bar{\Gamma}_{\minus} = \frac{\alpha + \Gamma}{\sigma}$ and $\bar{\Gamma}_{+} = \frac{\Gamma - \alpha}{\sigma}$ and the quantities
		\begin{equation}
			\begin{split}
			\Lambda & = \frac{1}{2\pi\rho}( \exp(-\bar{\Gamma}_{+}^2 /2) + \exp(-\bar{\Gamma}_{\minus}^2 /2)) \\
			\rho & = Q(\bar{\Gamma}_{+}) + Q(\bar{\Gamma}_{\minus}) \\
			\nu = Q(\bar{\Gamma}_{\minus}) / \rho
			\end{split}
		\end{equation}
		where $Q(\cdot))$ is the tail of standard normal variable. Then, by fixing $\bar{u}= u /p$ and letting $p \rightarrow \infty$, we have
		\begin{equation}
			cot(\boldsymbol{\hat{\beta}}, \boldsymbol{\mu}) \xrightarrow[]{\mathbb{P}}  \frac{1+\sigma\alpha\Lambda-2\nu}{\sigma\sqrt{(1-\alpha^2)\Lambda^2+1/\bar{u}\rho }}.
		\end{equation}
		\label{theorem1}
	\end{theorem}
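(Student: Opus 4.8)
The plan is to exploit scale invariance: the cotangent depends only on the \emph{direction} of $\hat{\boldsymbol{\beta}}$, so the common denominator in \eqref{eq:7} is irrelevant and it suffices to analyze the unnormalized vector $\boldsymbol{v} = \sum_{i=n+1}^{n+u} g_i \boldsymbol{x}_i$ with $g_i = \mathds{1}(|\boldsymbol{\beta}_s^T\boldsymbol{x}_i|>\Gamma)\,\mathrm{sgn}(\boldsymbol{\beta}_s^T\boldsymbol{x}_i)$. First I would fix an orthonormal frame adapted to the problem: write $\boldsymbol{\beta}_s = \alpha\boldsymbol{u} + \beta\boldsymbol{w}$ with $\beta=\sqrt{1-\alpha^2}$ and $\boldsymbol{w}\perp\boldsymbol{u}$ a unit vector, and let $\mathcal{W}^\perp$ denote the $(p-2)$-dimensional complement of $\mathrm{span}(\boldsymbol{u},\boldsymbol{w})$. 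For each sample set $a_i=\boldsymbol{u}^T\boldsymbol{x}_i$, $b_i=\boldsymbol{w}^T\boldsymbol{x}_i$, and $\boldsymbol{z}_i=\Pi_{\mathcal{W}^\perp}\boldsymbol{x}_i$; under the GMM these are independent, with $a_i\sim\mathcal{N}(y_i,\sigma^2)$, $b_i\sim\mathcal{N}(0,\sigma^2)$, $\boldsymbol{z}_i\sim\mathcal{N}(0,\sigma^2\boldsymbol{I}_{p-2})$, and the score is $s_i=\boldsymbol{\beta}_s^T\boldsymbol{x}_i=\alpha a_i+\beta b_i$. The cotangent then reduces to the ratio $\cot(\hat{\boldsymbol{\beta}},\boldsymbol{u}) = (\boldsymbol{u}^T\boldsymbol{v})/\|\Pi_{\boldsymbol{u}^\perp}\boldsymbol{v}\|$ of the signal coordinate to the orthogonal norm, and the whole proof becomes the evaluation of these two quantities.

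The second step is the signal coordinate $\boldsymbol{u}^T\boldsymbol{v}=\sum_i g_i a_i$, handled by the law of large numbers: $\tfrac{1}{u}\boldsymbol{u}^T\boldsymbol{v}\xrightarrow{\mathbb{P}}\mathbb{E}[g\,a]$ for a single copy $(g,a)$. Conditioning on $y=\pm1$ and using the conditional Gaussian identity for $\mathbb{E}[a\mid s]$ together with the tail integral $\int_t^\infty x\,\phi(x)\,dx=\phi(t)$, this expectation splits into a probability part $Q(\bar\Gamma_{+})-Q(\bar\Gamma_{-})$ and a density part $\sigma\alpha(\phi(\bar\Gamma_{+})+\phi(\bar\Gamma_{-}))$. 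Rewriting $Q(\bar\Gamma_{+})-Q(\bar\Gamma_{-})=\rho(1-2\nu)$ via $\rho=Q(\bar\Gamma_{+})+Q(\bar\Gamma_{-})$ and $\nu=Q(\bar\Gamma_{-})/\rho$, and recognizing $\phi(\bar\Gamma_{+})+\phi(\bar\Gamma_{-})$ as $\rho\Lambda$ by the definition of $\Lambda$, yields $\mathbb{E}[g\,a]=\rho(1+\sigma\alpha\Lambda-2\nu)$, which is exactly the stated numerator up to the common factor $\rho$.

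The third and most delicate step is the orthogonal norm. Decompose $\Pi_{\boldsymbol{u}^\perp}\boldsymbol{v} = \bigl(\sum_i g_i b_i\bigr)\boldsymbol{w} + \sum_i g_i\boldsymbol{z}_i$, which is an orthogonal decomposition, so $\|\Pi_{\boldsymbol{u}^\perp}\boldsymbol{v}\|^2 = \bigl(\sum_i g_i b_i\bigr)^2 + \|\sum_i g_i\boldsymbol{z}_i\|^2$. The first term is again the law of large numbers, $\tfrac1u\sum_i g_i b_i\to\mathbb{E}[g\,b]=\sigma\beta\rho\Lambda$, contributing $u^2\sigma^2(1-\alpha^2)\rho^2\Lambda^2$. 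The second term is where the high-dimensional regime enters: since $\boldsymbol{z}_i$ is independent of $(a_i,b_i)$, hence of $g_i$, conditionally on the $g_i$ the vector $\sum_i g_i\boldsymbol{z}_i$ is isotropic Gaussian in $\mathcal{W}^\perp$ with covariance $(\sum_i g_i^2)\sigma^2\boldsymbol{I}_{p-2}$, so a $\chi^2$ concentration gives $\|\sum_i g_i\boldsymbol{z}_i\|^2=(\sum_i g_i^2)\sigma^2(p-2)(1+o(1))$ with $\sum_i g_i^2=\sum_i\mathds{1}(|s_i|>\Gamma)\to u\rho$. Adding the two pieces, taking $p\to\infty$ with $\bar u=u/p$ fixed, and forming the ratio, the powers of $u$, $p$ and $\rho$ collapse to give the denominator $\sigma\sqrt{(1-\alpha^2)\Lambda^2+1/(\bar u\rho)}$, completing the identification with the claimed limit. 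I expect the main obstacle to be making this last step rigorous: one must control the \emph{joint} fluctuations of a ratio of two concentrating random quantities and verify that the cross term and the sub-leading $\chi^2$ fluctuations vanish in the proportional limit — precisely the technical content inherited from \cite{pmlr-v130-oymak21a}.
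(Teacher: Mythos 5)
Your derivation is correct, but it takes a genuinely different route from the paper, whose proof contains almost no analysis of its own. The paper's argument is a two-line reduction: (i) it invokes \cite{pmlr-v130-oymak21a}, where exactly this asymptotic cotangent formula is proved for centralized self-training with the averaging initializer $\boldsymbol{\beta}_{init}=\frac{1}{n}\sum_i y_i\boldsymbol{x}_i$ of \eqref{eq:10} and the thresholded update \eqref{eq:11}; and (ii) it observes the identity $\boldsymbol{\beta}_s=\sum_k\frac{n_k}{n}\boldsymbol{\beta}_{init}^k=\frac{1}{n}\sum_i y_i\boldsymbol{x}_i$, so that FD logit averaging reproduces the centralized initializer and the cited theorem transfers verbatim. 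You instead re-derive the cited result from scratch, and your reconstruction is sound: the scale-invariance reduction to $(\boldsymbol{u}^T\boldsymbol{v})/\|\Pi_{\boldsymbol{u}^\perp}\boldsymbol{v}\|$, the adapted frame $(\boldsymbol{u},\boldsymbol{w},\mathcal{W}^\perp)$, the conditional-Gaussian computations giving $\mathbb{E}[ga]=\rho(1+\sigma\alpha\Lambda-2\nu)$ and $\mathbb{E}[gb]=\sigma\sqrt{1-\alpha^2}\,\rho\Lambda$, and the $\chi^2$ bookkeeping $\|\sum_i g_i\boldsymbol{z}_i\|^2\approx u\rho\sigma^2 p$ all check out; in the proportional regime $u=\bar{u}p$ both denominator terms are $\Theta(p^2)$, so the ratio collapses to the stated limit. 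Your worry about the final step is actually milder than you fear: since $\boldsymbol{w}\perp\mathcal{W}^\perp$, your decomposition of $\|\Pi_{\boldsymbol{u}^\perp}\boldsymbol{v}\|^2$ is exactly orthogonal (no cross term exists), and Slutsky handles the ratio of concentrating quantities because the limiting denominator is strictly positive; conditioning on $\boldsymbol{\beta}_s$ is legitimate since the unlabeled samples are drawn independently of it. Two remarks. First, your identification $\rho\Lambda=\phi(\bar{\Gamma}_{+})+\phi(\bar{\Gamma}_{\minus})$, with $\phi$ the standard normal density, presumes the normalization $\frac{1}{\sqrt{2\pi}\rho}$ in the definition of $\Lambda$; the statement writes $\frac{1}{2\pi\rho}$, and your algebra shows the latter is a typo — the formula only balances with $\sqrt{2\pi}$. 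Second, your route omits the one piece of FD-specific content in the paper's proof, namely step (ii): as literally stated the theorem hypothesizes $\rho(\boldsymbol{\beta}_s,\boldsymbol{u})=\alpha$, so your direct proof does cover it, but the identity $\boldsymbol{\beta}_s=\boldsymbol{\beta}_{init}$ is what justifies reading a self-training theorem as a statement about the FD meta-algorithm, and deserves a line. The trade-off: the paper's reduction is shorter and makes the FD-equals-self-training equivalence explicit, while your self-contained argument exposes the meaning of $\rho$, $\nu$, $\Lambda$ (acceptance probability, error rate among accepted samples, boundary density) and would adapt to variants — non-uniform client weights, ERA-style thresholds — where the black-box citation could not.
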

	
	\begin{proof}
		From \cite{pmlr-v130-oymak21a}, it is proved that for the self-training algorithm with the initial estimator $	\boldsymbol{\beta}_{init} = \frac{1}{n}\sum_{i=1}^n y_i\boldsymbol{x}_i$ in \eqref{eq:10} and $\hat{\boldsymbol{\beta}}$ in \eqref{eq:11} Theorem \ref{theorem1} holds.
		
		From \eqref{eq:9} and \eqref{eq:8}, 
		\begin{equation}
			\boldsymbol{\beta}_s = \sum_{k=1}^n \frac{n_k}{n} \boldsymbol{\beta}_{init}^k =  \frac{1}{n}\sum_{i=1}^n y_i\boldsymbol{x}_i.
		\end{equation}
		The estimator by distributed training of the labeled data samples is the same as the self-training algorithms. Thus, the result still holds for the FD meta-algorithm.
	 \end{proof}
	
	\begin{remark}
		Theorem \ref{theorem1} shows that for the GMM binary classification problem, the FD algorithm can obtain a higher correlation for the estimator than the initial estimator $\boldsymbol{\beta}_{init}$, i.e., a better model is obtained after the model distillation step. The distilled model $\hat{\boldsymbol{\beta}}$ benefits from a larger unlabeled dataset and a higher accepting threshold, which is consistent with the empirical observations in Section \ref{observations}.
	\end{remark}

	\section{Proposed Algorithm via Adaptive Data Sampling} \label{simulations}
	
	The theoretical and empirical results suggest two approaches to improve the training performance:
	 \begin{enumerate*}
	 	\item increase the size of the logits; or
	 	\item choose the public data with low-entropy logits.
	 \end{enumerate*}
	 Thus, we propose a communication-efficient FD algorithm with Active data Sampling (FAS).
	 To increase the size of the distillation logits while maintaining the communication cost, each user will actively select the low-entropy logits to be uploaded with its locally trained model.  
	 
	 The main difference between the FD meta-algorithm and the proposed FAS algorithm is the active data sampling step.
	 To generate better teacher logits, the entropy of the selected logits should be low, which means that the local model is confident. However, using the low entropy as the only criterion may lead to the result that every client is very confident about its uploaded logits, but the selected data samples for distillation are very easy to classify, which may degrade the final performance. This is similar to the process of human learning. If we always learn tasks that we are already very confident about, then we cannot learn new things. We need to learn something basic but we also need to explore new and challenging things. Thus, we propose the following mixed active data selection strategy.
	 For a selected client $i$, assuming the communication budget is $N_{logit}$ logit samples, then the active data sampling step proceeds as follows:
	 \begin{enumerate}
	 	\item Generate pseudo labels of $D_{pub}$ with the locally trained model $\theta_i$;
	 	\item Select $N_{logit}/2$ logits from $D_{pub}$ with an ascending order in entropy to generate  half of $D_{logit}^i$, and the pseudo label distribution in this half $\mathcal{D}_{logit}^i$ needs be close to the local data distribution;
	 	\item Randomly select $N_{logit}/2$ from $D_{pub}$ to generate the other half of $D_{logit}^i$.
	 \end{enumerate}
	 	 	
	We next provide experimental results to demonstrate the effectiveness of the proposed FAS algorithm. 
	In the simulated FD system, there are 20 clients. The local private training data are a subset with 20,000 data samples of the CIFAR-10 dataset, which means each user has only 1,000 local private data samples. The distillation dataset is the other 20,000 data samples of the CIFAR-10 dataset. The neural network model is ResNet-8. In each communication round, 8 clients are selected randomly to participate in the learning process. For the local update and distillation, we adopt Adam with a batch size of 8 as the optimizer. The local update steps and the distillation steps are set as 20 epochs in each communication round. The step sizes of the local update and distillation are set to 0.02 and 0.001, respectively. The step size decays at the 300-th and 600-th epochs by a rate of 0.1. 
	
    We compare the following 4 data sampling methods for FAS under different data distributions:
	\begin{enumerate}
		\item  No data sampling ($D_{logit}$ is the same);
		\item Random data sampling ($D_{logit}^i$ is randomly sampled from $D_{pub}$);
		\item Low-entropy data sampling ($D_{logit}^i$ is sampled from $D_{pub}$ assuming an entropy ascending order);
		\item Mixed-random-low-entropy sampling.
	\end{enumerate}
	
	The results are listed in Table \ref{table1} and Table \ref{table2}. To ensure that there is an overlap of the selected logits, we select $400$ logits from $2000$ unlabeled public data samples for the simulations in Table \ref{table1} and $2000$ logits from $8000$ unlabeled public data for the simulations in Table \ref{table2} in each communication round. We perform experiments with different degrees of non-i.i.d. data distribution controlled by $\alpha$ in the Dirichlet distribution. A smaller $\alpha$ leads to a more non-i.i.d. data distribution. 
	
	All the methods with data sampling exhibit better performance than the No data sampling method due to a larger distillation dataset. The performance of the random sampling method degrades evidently when the data becomes non-i.i.d., i.e., $\alpha$ decreases. And the performance of low-entropy sampling increases with more non-i.i.d. data.
	The mixed sampling strategy provides consistently better or comparable performance in terms of  test accuracy compared with the other sampling methods, under different degrees of non-i.i.d. local data distributions. 
		
	\begin{table}
		\caption{\small Test accuracy of ResNet-8 on CIFAR-10, $N_{logit} =500$.}	
		\centering
		\begin{tabular}{|c | c |c | c| c|} 
			\hline
			 & NoSample& Random & Low-Entropy & Mixed \\ [0.5ex] 
			\hline\hline
			$\alpha=100$ &0.6835 & 0.6956 & 0.6694 & \textbf{0.7058} \\ 
			\hline
			$\alpha=1$ &0.6376 & \textbf{0.6516} & 0.6468 & 0.6506 \\
			\hline
			$\alpha=0.1$ &0.4644 & 0.468 & 0.5219 & \textbf{0.5498} \\
			\hline
		\end{tabular}
	\label{table1}
	\end{table}
	
	\begin{table}
	\centering
	\caption{\small Test accuracy of ResNet-8 on CIFAR-10, $N_{logit} =2000$.}
	\begin{tabular}{|c | c |c | c| c|} 
		\hline
		 & NoSample& Random & Low-Entropy & Mixed \\ [0.5ex] 
		\hline\hline
		$\alpha=100$ &0.7314 & 0.7381 & 0.7031 & \textbf{0.7400} \\ 
		\hline
		$\alpha=1$ &0.6947 & 0.6827 & \textbf{0.7014} & 0.6947 \\
		\hline
		$\alpha=0.1$ &0.5511 & 0.5308 & 0.5573 & \textbf{0.5788} \\
		\hline
	\end{tabular}
	\label{table2}
	\end{table}

	\section{Conclusions}
	In this paper, we presented an FD meta-algorithm that incorporates existing FD methods and investigated the effects of key parameters both experimentally and theoretically to provide several design guidelines. Inspired by the design guidelines, a simple but effective FD algorithm with active data sampling was proposed. Experiments showed that the proposed algorithm performs consistently well under different distributions of heterogeneous data. Analyzing the FD meta-algorithm for neural networks and adapting it to the auxiliary dataset with similar distribution are left for future work.
	
	\bibliographystyle{IEEEtran}  
	\bibliography{FDref.bib}  
	
	

\end{document}